%
%
\documentclass[11pt]{article}
\usepackage{acl2016}
\usepackage{graphicx,amsmath,amssymb,url,xspace,changepage,underscore,booktabs}
\usepackage{amsthm}
\makeatletter
\newtheorem*{rep@theorem}{\rep@title}
\newcommand{\newreptheorem}[2]{%
\newenvironment{rep#1}[1]{%
 \def\rep@title{#2 \ref{##1}}%
 \begin{rep@theorem}}%
 {\end{rep@theorem}}}
\makeatother

\usepackage{times}
\usepackage{latexsym}
\usepackage{fancyvrb}
\usepackage{microtype}
\IfFileExists{dont_upload_to_arxiv}{%
\usepackage[]{todonotes}
}{%
\usepackage[disable]{todonotes} 
}

\newcommand{\Todo}[1]{\todo[author=Pushpendre,size=\small,inline]{#1}}
\newcommand{\union}{\cup}
\IfFileExists{dont_upload_to_arxiv}{
}{
\aclfinalcopy 
}

\newcommand{\remove}[1]{} 

\newcommand{\eg}{e.g.,\xspace}

\newcommand{\ie}{i.e.,\xspace}
\newcommand{\bigie}{I.e.,\xspace}
\newtheorem{theorem}{Theorem}
\newreptheorem{theorem}{Theorem}
\newtheorem{lemma}[theorem]{Lemma}
\newreptheorem{lemma}{Lemma}

\newenvironment{definition}[1][Definition]{\begin{trivlist}
\item[\hskip \labelsep {\bfseries #1}]}{\end{trivlist}}

\usepackage{thmtools}
\declaretheoremstyle[%
  spaceabove=-1pt,%
  spacebelow=6pt,%
  headfont=\normalfont\itshape,%
  postheadspace=1em,%
  qed=\qedsymbol%
]{mystyle}
\declaretheorem[name={Proof},style=mystyle,unnumbered,
]{prf}

\newcommand{\figref}[1]{Figure~\ref{#1}}
\newcommand{\tabref}[1]{Table~\ref{#1}}
\newcommand{\thref}[1]{Theorem~\ref{#1}}
\newcommand{\lemref}[1]{Lemma~\ref{#1}}
\newcommand{\secref}[1]{Section~\ref{#1}}
\newcommand\numberthis{\addtocounter{equation}{1}\tag{\theequation}}
\title{A Critical Examination of RESCAL for Completion of Knowledge Bases with Transitive Relations}

\makeatletter
\renewcommand*{\@fnsymbol}[1]{\ensuremath{\ifcase#1\or \dagger\or \ddagger\or
   \mathsection\or \mathparagraph\or \|\or **\or \dagger\dagger
   \or \ddagger\ddagger \else\@ctrerr\fi}}
\makeatother
\IfFileExists{dont_upload_to_arxiv}{
\author{}
}{
\author{Pushpendre Rastogi\thanks{\texttt{pushpendre@jhu.edu}} \and Benjamin Van Durme \\
  Johns Hopkins University}
}
\date{}

\begin{document}

\maketitle
\begin{abstract} Link prediction in large knowledge graphs has received a lot of
attention recently because of its importance for inferring missing relations and
for completing and improving noisily extracted knowledge graphs. Over the years
a number of machine learning researchers have presented various models for
predicting the presence of missing relations in a knowledge base.  Although all
the previous methods are presented with empirical results that show high
performance on select datasets, there is almost no previous work on
understanding the connection between properties of a knowledge base and the
performance of a model. In this paper we analyze the RESCAL
method~\cite{nickel2011three} and show that it can not encode \textit{asymmetric
transitive} relations in knowledge bases.
\end{abstract}

\section{Introduction}
\label{sec:introduction}
Large-scale and highly accurate knowledge bases~(KB) such as
Freebase~\cite{bollacker2008freebase} and YAGO2~\cite{hoffart2013yago2},
have come to be recognized as essential for high
performance on various Natural Language Processing(NLP) tasks. Relation extraction,
Question Answering~\cite{dalton2014entity,fader2014open,yao2014information}
and Entity
Recognition/Disambiguation in informal domains~\cite{ritter2011named,zheng2012entity}
are a few examples of tasks where KBs have
proved to be invaluable. As these
examples demonstrate, increasing the recall of knowledge bases without
compromising on the precision has a direct impact on several tasks that are the
focus of NLP research. Because of this importance of high recall in
knowledge bases and because the recall of even Freebase, the largest open source
KB, is still quite low\footnote{It was reported by~\newcite{dong2014knowledge}
in October 2013, that 71\% of people in Freebase had no known place of birth and
that 75\% had no known nationality.} a number of researchers have published
heuristics with their empirical performance on automatically inferring the
information that is missing in knowledge bases. Unfortunately the literature
on theoretical analysis for these methods is still scarce.

In this paper we analyze RESCAL~\cite{nickel2011three} which
is a widely cited method for inferring missing relations in KBs.
The RESCAL method embeds entities and relations in a KB using vectors and matrices
respectively and it predicts the true status of en edge between two nodes
using these representations. Although RESCAL was introduced in 2011
and has been shown to be effective on a variety of
datasets~\cite{toutanova2015representing,nickel2011three,nickel2012factorizing}
there has been no theoretical analysis of the failure modes of this method.
We show , both theoretically and experimentally (Sections~\ref{sec:main-result}
and \ref{sec:experiment}),
that RESCAL is not suitable for predicting missing relations
in a KB that contains transitive and asymmetric relations such as
the ``type of'' relation which is very important in
Freebase~\cite{guha2015towards} and the ``hypernym'' relation which is important in
WordNet~\cite{miller1995wordnet}.

\section{Analysis of RESCAL}
\label{sec:main-result}
\paragraph{Notation:} A
knowledge base contains, but is not equal to, a collection of
\textit{(subject, relation, object)}
triples. Each triple encodes the fact that a \textit{subject} entity is related to
an \textit{object} through a particular type of \textit{relation}.  Let
$\mathcal{V}$ and $\mathcal{R}$ denote the finite set of entities and
relationships. We assume that $\mathcal{R}$ includes a type for the \textit{null
relation} or \textit{no relation}. Let $\mathrm{V} = |\mathcal{V}|$ and
$\mathrm{R}=|\mathcal{R}|$ denote the number of entities and relations.
We use $v$ and $r$ to denote a generic entity and relation respectively.
The shorthand $[n]$ denotes $\{x | 1 \le x \le n, x \in \mathbb{N}\}$.
Let $\mathrm{E}$ be the number of triples known to us and let $e$ denote a
generic triple. We denote the
subject, object and relation of $e$ through $e^{sub} \in \mathcal{V}$, $e^{obj}
\in \mathcal{V}$ and $e^{rel} \in \mathcal{R}$ respectively and we denote the
entire collection of facts as $\mathcal{E} = \{e_k | k \in [\mathrm{E}]\}$.

\paragraph{RESCAL:} The RESCAL model associates each entity $v$ with the
vector $a_v \in \mathbb{R}^d$ and it represents the relation $r$ through the
matrix $M_r \in \mathbb{R}^{d \times d}$. Let $v$ and $v'$ denote two entities whose
relationship is unknown, the RESCAL model predicts the relation between $v$ and $v'$
to be:
\begin{align}
  \hat{r} &= \operatorname*{argmax}_{r \in \mathcal{R}} s(v, r, v')\\
  s(v, r, v') &= s(v, r, v') = a_v^{{T}} M_r a_{v'}  \label{eq:score}
\end{align}
Note that in general if the matrix $M_r$ is asymmetric then the score function
$s$ would also be asymmetric, \ie $s(v, r, v') \ne s(v', r, v)$.
Let $\Theta = \{a_v | v \in \mathcal{V}\} \union{} \{M_r | r \in \mathcal{R}\}$.
Clearly $\Theta$ parameterizes RESCAL.
Therefore even though the same embedding is used to represent an entity regardless of whether it is the first or the second entity in a relation, the RESCAL model could still
handle asymmetric relations if the matrix $M_r$ is asymmetric.
\paragraph{Transitive Relations and RESCAL:} In addition to relational
information about the binary connections between entities, many KBs
contain information about the relations themselves.
For example, consider the toy knowledge base depicted in \figref{fig:toykb}.
Based on the information that \texttt{Fluffy} \textit{is-a} \texttt{Dog} and that a
\texttt{Dog} \textit{is-a} \texttt{Animal} and that \textit{{is-a}}
is a transitive relations we can infer missing relations such as
\texttt{Fluffy} \textit{is-a} \texttt{Animal}.
\begin{figure}[htbp]
  \centering
  \IfFileExists{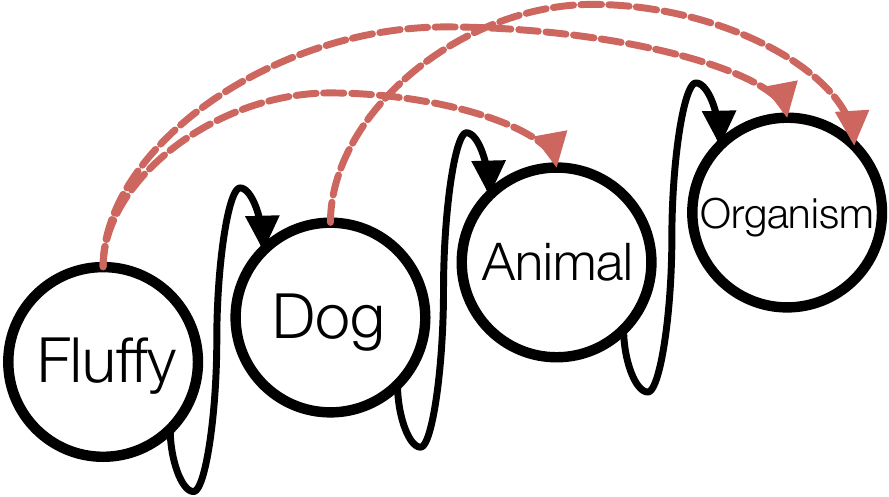}{
    \includegraphics[width=.6\columnwidth]{fluffy-crop.pdf}}{
    \includegraphics[width=.6\columnwidth]{figure/fluffy-crop.pdf}}
  \caption{\footnotesize  A toy knowledge base containing only \textit{is-a} relations. The
dashed edges indicate unobserved relations that can be recovered using the
observed edges and the fact that \textit{is-a} is a transitive relation.}
  \label{fig:toykb}
\end{figure}

Let us now analyze what happens when we encode a transitive, asymmetric
relation with RESCAL. Consider the situation where the set
$\mathcal{R}$ only contains two relations $\{r_0, r_1\}$.
$r_1$ denotes the presence of the \textit{is-a}
relation and $r_0$ denotes the absence of that relation.
The RESCAL model can only follow the chain of transitive relations
and infer missing edges using existing information in the graph
if for all triples of vertices $v, v', v''$
in $\mathcal{V}$ for which we have observed \textit{($v$, is-a, $v'$)} and
\textit{($v'$, is-a, $v''$)} the following holds true:
\begin{adjustwidth}{-.5em}{}
\begin{align*}
& s(v, r_1, v') > s(v, r_0, v') \land s(v', r_1, v'') > s(v', r_0, v'') \\
& \implies s(v, r_1, v'') > s(v, r_0, v'')
\end{align*}
\end{adjustwidth}
This can be rewritten as:
\begin{align*}
& \forall v, v', v'' \in \mathcal{V} \\
& a_v^{{T}} (M_{r_1} - M_{r_0}) a_{v'} > 0 \land a_{v'}^{{T}} (M_{r_1} - M_{r_0}) a_{v''} > 0 \\
& \implies a_{v}^{{T}} (M_{r_1} - M_{r_0}) a_{v''} > 0 \numberthis \label{eq:mtrans}
\end{align*}

\noindent We now define a \textit{transitive matrix} and
and state a theorem that we prove in the Appendix.
\begin{definition} We say that a matrix $M \in \mathbb{R}^{d \times d}$ is
transitive if every triple of vectors $a,b,c \in \mathbb{R}^d$ that satisfy
$a^{{T}} M b > 0$ and $b^{{T}} M c > 0$ also satisfy $a^{{T}} M c > 0$.
\end{definition}

\begin{theorem}\label{thmMain}
  Every transitive matrix is symmetric.
\end{theorem}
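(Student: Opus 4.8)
The plan is to translate the algebraic transitivity condition into a statement about containment of half-spaces, extract from this a rigid structural constraint on $M$, and then finish with a short rank computation. For a nonzero $a$, let $H_a = \{x \in \mathbb{R}^d : a^{T} M x > 0\}$ be its set of ``out-neighbours''; since $a^{T} M x = (M^{T}a)\cdot x$, this $H_a$ is exactly the open half-space through the origin with inner normal $M^{T}a$ (and is empty precisely when $M^{T}a = 0$). Rewriting the definition of a transitive matrix in this language, the hypotheses $a^{T}Mb>0$ and $b^{T}Mc>0$ say $b \in H_a$ and $c \in H_b$, while the conclusion says $c \in H_a$. Hence transitivity is equivalent to the single implication: whenever $a^{T}Mb>0$, one has $H_b \subseteq H_a$.

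Next I would record the elementary fact that, for two nonempty open half-spaces through the origin, $H_b \subseteq H_a$ forces $H_b = H_a$, and therefore forces $M^{T}b$ to be a positive multiple of $M^{T}a$: if the normals were linearly independent one could solve for an $x$ with $(M^{T}b)\cdot x>0$ and $(M^{T}a)\cdot x<0$, exhibiting a point of $H_b \setminus H_a$. Combined with the reformulation (and the trivial case $M^{T}b=0$, where $H_b=\emptyset$), this yields the key structural property: $a^{T}Mb>0 \Rightarrow M^{T}b \in \mathbb{R}_{\ge 0}\,M^{T}a$. In words, transitivity collapses all the out-normal directions encountered along the relation into a single ray. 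I expect this to be the crux of the whole argument, the genuinely nontrivial observation; everything after it is bookkeeping.

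I would then use this property to show $\operatorname{rank}(M)\le 1$. Suppose instead $\operatorname{rank}(M)\ge 2$. Since a real vector space is not a union of two proper subspaces, choose $b_1 \notin \ker M \cup \ker M^{T}$. Because $Mb_1 \neq 0$, the set $\{a : a^{T}Mb_1>0\} = \{a:(Mb_1)\cdot a>0\}$ is a full-dimensional open half-space; whereas the ``bad'' set $\{a : M^{T}a \text{ is linearly dependent with } M^{T}b_1\} = (M^{T})^{-1}(\mathbb{R}\,M^{T}b_1)$ is a proper subspace, since the image of $M^{T}$ has dimension $\ge 2$. Picking an $a$ in the half-space but outside the bad set gives $a^{T}Mb_1>0$ while $M^{T}b_1$ (which is nonzero) is not a nonnegative multiple of $M^{T}a$, contradicting the key property.

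Finally I would dispatch the low-rank cases. Rank $0$ gives $M=0$, which is symmetric. For rank $1$, write $M = v w^{T}$, so that $a^{T}Mb = (v\cdot a)(w\cdot b)$. If $v$ and $w$ were linearly independent I could choose $a$ with $v\cdot a>0$, then $b$ with $w\cdot b>0$ and $v\cdot b<0$, then $c$ with $w\cdot c<0$; these choices are jointly feasible by independence and yield $a^{T}Mb>0$ and $b^{T}Mc>0$ but $a^{T}Mc=(v\cdot a)(w\cdot c)<0$, violating transitivity. Hence $w$ is a scalar multiple of $v$, so $M=\lambda v v^{T}$ is symmetric, completing the proof.
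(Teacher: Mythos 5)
Your proof is correct, and it takes a genuinely different route from the paper's. The paper argues in three steps: it first shows every transitive matrix is positive semidefinite (\lemref{lem2}, via the triple $c:=x$, $b:=Mx$, $a:=M^2x$), uses this to rule out $x^TMy>0$ with $x^TM^Ty<0$ (\lemref{lem3}), and then invokes a technical proportionality lemma (\lemref{lem1}: a sign implication between two nonzero bilinear forms forces $M_1=\lambda M_2$, $\lambda>0$) to conclude $M=\lambda M^T$ with $\lambda=1$. You instead recast transitivity as containment of half-spaces $H_b\subseteq H_a$, deduce that all out-normals $M^Ta$ met along the relation lie on a single ray, and convert that rigidity into the bound $\operatorname{rank}(M)\le 1$ by a dimension count (a proper subspace cannot contain an open half-space); the rank-one case $M=vw^T$ is then settled by an explicit three-vector counterexample unless $w\in\mathbb{R}v$. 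The two arguments share one ingredient --- both ultimately rest on the fact that a sign implication between linear functionals forces positive proportionality (the paper's \lemref{lem0} plays this role; your half-space-normal observation plays it for you) --- but your architecture is different and delivers a strictly stronger conclusion: a transitive matrix is not merely symmetric but of the form $\mu vv^T$, i.e., of rank at most one (from which PSD-ness, the paper's \lemref{lem2}, also follows easily, since $\mu<0$ is excluded by a similar sign check). Your route also avoids the most delicate part of the paper's argument, namely the continuity-of-$\lambda_x$ reasoning in the supplementary proof of \lemref{lem1}, replacing it with elementary linear algebra (a vector space is not the union of two proper subspaces, and a proper subspace has empty interior). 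The one point you state without full justification --- that $H_b\subseteq H_a$ for nonempty half-spaces forces the normals to be \emph{positive} multiples of each other --- needs, besides your linear-independence argument, the one-line remark that a negative multiple would make $H_b$ and $H_a$ disjoint; this is immediate and does not affect correctness.
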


If we enforce the constraint in Equation~\ref{eq:mtrans} to hold
for all possible vectors and not just a finite number of vectors
then $M_{r_1} - M_{r_0}$ is a transitive matrix.
By \thref{thmMain} $M_{r_1} - M_{r_0}$ must be symmetric.
This implies that if the RESCAL
model predicts that $s(v, r_1, v') > s(v, r_0, v')$ then it would also predict
that $s(v', r_1, v) > s(v', r_0, v)$. In terms of the toy KB shown in
\figref{fig:toykb}; if the RESCAL model predicts that \texttt{Fluffy} \textit{is-a}
\texttt{Animal}
then it would also predict that \texttt{Animal} \textit{is-a} \texttt{Fluffy}.
Therefore the RESCAL model
is not suitable for encoding assymmetric, transitive relations.

\section{Experiments}
\label{sec:experiment}
During our analysis in \secref{sec:main-result},
we made assumption that the constraint of
equation~\ref{eq:mtrans} held over all vectors in
$\mathbb{R}^d$ instead of just a finite number of vector triples.
This assumption was used to make conclusions about
RESCAL using \thref{thmMain}.

A fair criticism of our analysis is that practically the
RESCAL model only needs to encode a finite number of vertices into vector space
and it is possible that there exists an asymmetric matrix that can correctly
make the finite number of deductions that are possible inside a finite KB. This could
be especially true when the dimensionality $d$ of the RESCAL embeddings is high.
On the other hand, it is intuitive that as the number of entities
inside a KB increases our assumptions and analysis would become
increasingly better approximations of reality. Therefore the performance of the
RESCAL model should degrade as the number of entities inside the KB increases
and the dimensionality of the embeddings remains constant.

\subsection{On Simulated Data}
\label{sec:simulation-study}
In order to test the applicability of our analysis we performed the following
experiment:  We started with a complete, balanced, rooted, directed binary tree
$\mathcal{T}$, with edges directed \textit{from} the root \textit{to} its
children. We then augmented $\mathcal{T}$ as follows: For every tuple of
distinct vertices $v$, $v'$ we added a new edge to $\mathcal{T}$ if there
already existed a directed path starting at $v$ and ending at $v'$ in
$\mathcal{T}$. We stopped when we could not add any more edges without creating
multi-edges.  For the rest of the paper we denote this resulting set of ordered
pairs of vertices as $\mathcal{E}$ and those pairs of vertices that are not in
$\mathcal{E}$ as $\mathcal{E}^c$.  For example $\mathcal{E}$ contains an edge
from the root vertex to every other vertex and $\mathcal{E}^c$ contains an edge
from every vertex to the root vertex. For a tree of depth 11, $\mathrm{V} =
2047, \mathrm{E}=18,434$ and $|\mathcal{E}^c| = 4,171,775$.

We trained the RESCAL model under two settings: In the first setting we used
entire $\mathcal{E}$ and $\mathcal{E}^c$ as training inputs to the RESCAL model.
We denote this setup as \textit{FullSet}. In the second setting
we randomly sample $\mathcal{E}^c$ and select only $\mathrm{E} = |\mathcal{E}|$
edges from $\mathcal{E}^c$.  We denote this training setup as
\textit{SubSet}.
Note however, that all in the edges in $\mathcal{E}$ including all the edges
in the original tree are always used during both \textit{FullSet} and
\textit{SubSet}.

For both the settings of \textit{FullSet} and \textit{SubSet}
we trained the RESCAL model 5 times and evaluated the models'
predictions on the following three subsets of the edges:
$\mathcal{E}$, $\mathcal{E}^c$ and $\mathcal{E}^{(rev)}$.
$\mathcal{E}$ and $\mathcal{E}^c$ were introduced earlier.
To recall, $\mathcal{E}$ contains all
ordered pairs of vertices that are in the transitive relation of being
connected, $\mathcal{E}^c$ contains pairs
of vertices that are not connected and not in a relation.  $\mathcal{E}^{(rev)}$
denotes the set of ordered pairs whose reverse pair exists in $\mathcal{E}$.  \bigie
$\mathcal{E}^{rev} = \{(u,v) | (v,u) \in \mathcal{E}\}$.
For every edge in each of these subsets we evaluate the model's performance under
$0-1$ loss. For example, when we
evaluate the performance of RESCAL on an edge $(v, v') \in \mathcal{E}$ we
evaluate whether the model assigns a higher score to $(v, r_1, v')$ than
$(v, r_0, v')$ and reward the model by $1$ point if it makes the right
prediction and $0$ otherwise.
As before, $r_1$ and $r_0$ denote the presence and abscence of relationship between $v$
and $v'$.

We note that low Performance on $\mathcal{E}^{rev}$ and high performance on
$\mathcal{E}$ would indicate exactly the kind of failure that we predicted from
our analysis.

As explained earlier, the dimensionality of the RESCAL embedding, $d$, and the
number of entities, $\mathrm{V}$ significantly influence the performance of
RESCAL therefore we vary them and tabulate the results in
\tabref{tab:nns} and \ref{tab:ns}.
\IfFileExists{dont_upload_to_arxiv}{
  \footnote{We have submitted our experiment script as well.}
}{}

\begin{table}[htbp]
  \centering
  \begin{tabular}{l | c | c | c}
$d$    & $\mathrm{V}=2047$ & $4095$     & $8191$     \\\hline
$50$   & 66 100 100        & 60 100 100 & 54 100 100 \\
$100$  & 76 100 100        & 69 100 100 & 63 100 100 \\
$200$  & 86 100 100        & 79 100 100 & 72 100 100 \\
$400$ & 94 100 100        & 88 100 100 & 81 100 100 \\
  \end{tabular}
  \caption{\footnotesize Percentage accuracy of RESCAL with \textit{FullSet}.
    Every table element is a triple of numbers measuring the performance of
    RESCAL on $\mathcal{E}, \mathcal{E}^c, \mathcal{E}^{rev}$ respectively.
    $\mathrm{V}$ denotes the number of nodes in the tree and $d$ denotes the
    number of dimensions used to parameterize the entities.}
  \label{tab:nns}
\end{table}

\begin{table}[htbp]
  \centering
  \begin{tabular}{l| c| c | c}
    $d$    & $\mathrm{V}=2047$ & $4095$    & $8191$    \\\hline
    $50$   & 100 93 52         & 100 91 48 & 100 89 44 \\
    $100$  & 100 78 58         & 100 92 56 & 100 89 52 \\
    $200$  & 100 60 72         & 100 71 61 & 100 90 59 \\
    $400$ & 100 54 67         & 100 57 70 & 100 65 62 \\
  \end{tabular}
  \caption{\footnotesize Accuracy of RESCAL trained with \textit{SubSet}.}
  \label{tab:ns}
\end{table}

\subsection{On WordNet}
\label{sec:wordnet-experiments}
In order to test our analysis on real data we performed experiments on the WordNet dataset. WordNet contains vertices called synsets that are arranged in a tree like hierarchy under the relation of \textit{hyponymy}. For example, a \textit{dog} is a hyponym of \textit{animal} and an animal is a hyponym of \textit{living\_thing} therefore a \textit{dog} is a hyponym of \textit{living\_thing}. To conduct our experiments we extracted all the hyponyms of the \textit{living\_thing} synset as a tree and edges to that tree to form a transitive closure under the hyponym relation. The \textit{living\_thing} synset contained $16255$ hyponyms which were connected with $16489$ edges and after performing the transitive closure the number of edges became $128241$, \ie $\mathrm{V} = 16,255$ and $\mathrm{E} =128,241$.
We performed two experiments under the \textit{FullSet} and \textit{SubSet} protocols in exactly the same way as described in \secref{sec:simulation-study} with the new graph.
The results, shown in \tabref{tab:wordnet}, exhibit the same trends as seen in Table~\ref{tab:nns} and \ref{tab:ns}. See the following section for a more thorough discussion of results.
  \begin{table}[htbp]
    \centering
    \begin{tabular}{l | c || c}
      $d$     & \textit{FullSet} & \textit{SubSet} \\\hline
      $50$    & 71 100 100&100 93 58\\
      $100$   & 79 100 100&100 94 60\\
      $200$   & 84 100 100&100 93 63\\
      $400$   & 89 100 100 & 100 68 69\\
    \end{tabular}
    \caption{Results from experiments on WordNet.
      Specifically we chose to use the subtree
      rooted at the \textit{living_things} synset from the WordNet hierarchy.
      Every synset in the subtree corresponds to a vertex. Consequently,
      for all our experiments $\mathrm{V}=16413$.}
    \label{tab:wordnet}
  \end{table}

\section{Related Work}
\label{sec:related-work}
Most previous works for inferring the missing information in knowledge bases
assumes that a knowledge base is just a graph with labeled
vertices and labeled edges~\cite{nickel2016review,toutanova2015representing} and
they either focus on inferring which labeled edge, if any, should be used to
connect two previously unconnected vertices or they try to learn what vertex
label/entity type, if any, should be used to annotate an unlabeled entity.

The task of
predicting missing edges in a KB, which we focus on, has previously been
called Link Prediction~\cite{liben2007link,nickel2011three}, Knowledge Base
Completion (KBC)~\cite{socher2013reasoning,west2014knowledge} or more broadly
Relational Machine Learning~\cite{nickel2016review}.\footnote{The terminology
used for the task of inferring missing vertex labels, which is not the focus of
this paper, is even more diverse. This task has been termed Class/Labeled
Instance acquisition~\cite{van2008finding,talukdar2010experiments}, Collective
Classification~\cite{sen2008collective} and Vertex
Nomination~\cite{fishkind2015vertex}.}  Besides the
before-mentioned papers the following publications also present models for
KBC which we list without comments
\cite{bordes2011learning,Lao2011random,gardner2015efficient,lin2015learning,Zhao2015knowledge,Wang2015knowledge,he2015knowledge,Wei2015large}.
\Todo{
a study of RESCAL and other
tensor factorization techniques was already studied in "Towards Combined Matrix
and Tensor Factorization for Universal Schema Relation Extraction"

So what? Their method of analysis and conclusions are very different.
Do I need to qualify that this paper is different?}

\Todo{Issues with
RESCAL-like algorithms have been known for a while and many papers proposed
improvements and refinements. These should be discussed and considered like the
introduction of typing for instance:

Chang, K. W., Yih, W. T., Yang, B., and Meek, C. (2014, July). Typed Tensor
Decomposition of Knowledge Bases for Relation Extraction. In EMNLP (pp.
1568-1579).

But this paper intends to use type information during training. My analysis
is orthogonal to it.}

\section{Results and Discussion}
\label{sec:results}
\tabref{tab:nns} and~\ref{tab:ns} show the performance of
the RESCAL model, for encoding three
subsets of relational information, $\mathcal{E}, \mathcal{E}^c$ and
$\mathcal{E}^{rev}$ in increasingly large KBs with a single transitive
relation under a broad range of settings.

The results in \tabref{tab:nns} were obtained by feeding RESCAL
$\mathcal{E} \cup \mathcal{E}^c$ as training data. Note that
RESCAL received all possible information during training so we are evaluating the
training accuracy of the model at this point. Low accuracy under this
setting implies that the model does not have the capacity to learn the rules in
the knowledge base. We observe that the accuracy
of RESCAL decreases as the number of entities, $\mathrm{V}$ increases and it
increases as the dimensionality, $d$ increases which in line with our
predictions. We also note that since
$\mathcal{E}^c$ is much larger than $\mathcal{E}$ therefore the
training objective of RESCAL favors good performance on $\mathcal{E}^c$ and
accordingly the accuracy of RESCAL on edges in $\mathcal{E}^c$ remains high but the
performance on $\mathcal{E}$ suffers. 
The high accuracy of RESCAL with $\mathrm{V}=2047$ and $d=400$ suggests
that with a high enough dimensionality of the
embeddings it is possible to embed a finite database with high
accuracy. But increasing the dimensionality
of RESCAL embeddings can become infeasible for an extremely large
knowledge base. Also we can observe that the performance of RESCAL degrades as
the number of entities inside the KB increases and the dimensionality of the
embeddings remains constant.

The results in \tabref{tab:ns} were obtained by training RESCAL with
$\mathcal{E}$ and a subset of $\mathcal{E}^c$. This training method is closer to
the way such embedding based methods for KBC are usually trained~\cite{nickel2016review}. We observe that the accuracy of the
RESCAL model on $\mathcal{E}^{rev}$ is substantially lower than its performance
on either $\mathcal{E}$ or $\mathcal{E}^c$, especially in the upper triangle
region of the table where $\mathcal{V}$ is high and $d$ is low.
This result is in accordance with
our analysis that under the RESCAL mode if $s(v, r_1, v') > s(v, r_0, v')$ then
$s(v', r_1, v) > s(v', r_0, v)$ as well. Our results also highlight a problem with
the commonly employed KBC evaluation protocol
of randomly dividing the edge set of a graph into train and test sets for measuring
knowledge base completion accuracy.
For example with $d=50$ the average accuracy on both $\mathcal{E}$
and $\mathcal{E}^c$ is quite high but on
$\mathcal{E}^{rev}$ accuracy is low even though $\mathcal{E}^{rev}$ is a subset of
$\mathcal{E}^c$. Such a failure would stay undetected with existing
evaluation methods.

\Todo{It would be more interesting if the authors can discuss alternative tensor
decomposition methods that might be suitable for modeling the transitive
asymmetric relations.
       - It seems to me the same argument also holds for other tensor
factorization methods with similar structure such as DEDICOM}
\section{Conclusions}
\label{sec:conclusions}
In this paper we investigated a popular KBC algorithm
named RESCAL and through our analysis of the scoring function employed in
RESCAL and our experiments on simulated data, we showed that the
RESCAL method does not
perform well in encoding transitive and asymmetric relations and specifically
that its inferences about edges that are the reverse of edges that are present in a
knowledge have a high chance of being incorrect. Although our analysis relied on
somewhat strong assumptions that the constraint in equation~\ref{eq:score}
holds true over all points in the vector space we showed that the insights
gained were useful in practice.

One of the key idea underlying
our work was that knowledge bases should be considered as more than just graphs
since KBs also contain logical structure amongst the predicates.
By taking such logical structure, \eg the constraint that if vertex $v$ connects to $v'$ and $v'$ connects to $v''$ then $v$ connects to $v''$, to a logical extreme we came up with a well founded argument about the performance of RESCAL in encoding knowledge bases with transitive relations.
We believe that this idea can be gainfully used to analyze other KBC methods as well.


\appendix
\section{Proof of Theorem~\ref{thmMain}}
\label{sec:proof}
We note that \thref{thmMain} was first proven by \newcite{212808}.
Here we give an alternative proof.
\begin{lemma}\label{lem2}
  Every transitive matrix is PSD.
\end{lemma}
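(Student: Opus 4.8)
The plan is to argue by contradiction, since the only tool available is the transitivity implication and its most natural use is to rule out a configuration. Suppose $M$ is transitive but not PSD, so there is a vector $v$ with $v^{{T}} M v < 0$. I would instantiate the transitivity hypothesis as economically as possible: take $a = c = v$ and let $b$ range over all of $\mathbb{R}^d$. Transitivity then reads $v^{{T}} M b > 0 \land b^{{T}} M v > 0 \implies v^{{T}} M v > 0$, and because the conclusion is false by assumption, the premise must fail for every $b$. In other words, no single $b$ can make both $v^{{T}} M b$ and $b^{{T}} M v$ strictly positive.

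Next I would read both quantities as linear functionals of $b$. Writing $u_1 = M^{{T}} v$ and $u_2 = M v$, we have $v^{{T}} M b = \langle u_1, b \rangle$ and $b^{{T}} M v = \langle u_2, b \rangle$, so the statement just derived says the two open half-spaces $\{b : \langle u_1, b \rangle > 0\}$ and $\{b : \langle u_2, b \rangle > 0\}$ are disjoint. The geometric fact I would establish is that this disjointness forces one of three degenerate situations: $u_1 = 0$, or $u_2 = 0$, or $u_2 = -\lambda u_1$ for some $\lambda > 0$ (the two vectors are antiparallel). The quick justification: if $u_1, u_2$ are both nonzero and not antiparallel, then either they are linearly independent, in which case $b \mapsto (\langle u_1, b\rangle, \langle u_2, b\rangle)$ is onto $\mathbb{R}^2$ and in particular attains $(1,1)$, or they are positively proportional, in which case $b = u_1$ makes both functionals positive; either way the intersection is nonempty, a contradiction.

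Finally I would close each of the three branches and show they all collapse the quadratic form. If $M^{{T}} v = 0$ or $M v = 0$, then $v^{{T}} M v = 0$ at once. In the antiparallel case $M v = -\lambda M^{{T}} v$ with $\lambda > 0$, I would use the scalar identity $v^{{T}} M^{{T}} v = v^{{T}} M v$ to get $v^{{T}} M v = -\lambda\, v^{{T}} M v$, hence $(1 + \lambda)\, v^{{T}} M v = 0$ and $v^{{T}} M v = 0$. Each branch contradicts $v^{{T}} M v < 0$, so no such $v$ exists and $M$ is PSD. I expect the main obstacle to be the half-space disjointness characterization together with its degenerate cases (a functional being identically zero), since that is where the argument must be watertight; the arithmetic in the three branches is then routine.
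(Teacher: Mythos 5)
Your proof is correct, but it takes a genuinely different route from the paper's. The paper avoids contradiction entirely: it instantiates transitivity at the single triple $c := x$, $b := Mx$, $a := M(Mx)$, so that the two premises become $a^{T} M b = \|Mb\|^2 \ge 0$ and $b^{T} M c = \|b\|^2 \ge 0$ essentially for free, while the conclusion equals exactly $a^{T} M c = b^{T} M b$; after splitting off the degenerate cases $b = 0$ and $Mb = 0$, transitivity forces $b^{T} M b \ge 0$. That argument is shorter and needs no geometry, but note that it only establishes $(Mx)^{T} M (Mx) \ge 0$, i.e.\ nonnegativity of the quadratic form on the \emph{range} of $M$. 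Your argument instead fixes an arbitrary $v$ with $v^{T} M v < 0$, instantiates transitivity at $(v, b, v)$ for every $b$, and converts the failure of the premise into disjointness of the two open half-spaces $\{b : \langle M^{T} v, b\rangle > 0\}$ and $\{b : \langle Mv, b\rangle > 0\}$; your characterization of when two open half-spaces are disjoint (one normal vanishes, or the normals are antiparallel) is correct, and each of the three branches indeed collapses $v^{T} M v$ to $0$, contradicting $v^{T} M v < 0$. The payoff is that you prove the unrestricted statement $\forall v:\ v^{T} M v \ge 0$, which is what Lemma~\ref{lem3} actually uses: there, the vector $x$ for which $x^{T} M x < 0$ is derived need not lie in the range of $M$. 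So your proof is somewhat longer and leans on a geometric fact, but it delivers the lemma in the strength required downstream, whereas the paper's own proof, read literally, leaves a gap when $M$ is singular.
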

\begin{prf}
  Consider the triplet of vectors $c := x, b := Mc, a := Mb$. Then
  $a^T(Mb) = ||Mb||^2 \ge 0$ and $b^T(Mc) = ||b||^2 \ge 0$ and $a^TMc = b^TMb$.
  Either $b = 0$ or $b \ne 0$ and $Mb = 0$, or both $Mb \ne 0$ and $b
  \ne 0$ which implies $b^TMb > 0$ (by transitivity). In all three
  cases $b^TMb \ge 0$.
\end{prf}

\begin{lemma}\label{lem1}
  Let $M_1, M_2 \in \mathbb{R}^{d \times d} \setminus \left\{ 0 \right\}$. If $\
  \forall x, y : x^TM_1y > 0 \implies x^TM_2y > 0$
  then $M_1 = \lambda M_2$ for some $\lambda > 0$.
\end{lemma}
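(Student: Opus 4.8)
The plan is to reformulate the one-sided sign condition as a statement about the zero sets of the two bilinear forms $f_i(x,y) = x^T M_i y$, and then upgrade containment of zero sets into proportionality of the matrices. First I would symmetrize the hypothesis: replacing $x$ by $-x$ turns $x^T M_1 y > 0 \implies x^T M_2 y > 0$ into $x^T M_1 y < 0 \implies x^T M_2 y < 0$, and taking contrapositives of both implications shows that $x^T M_2 y = 0$ forces $x^T M_1 y = 0$ for every pair $(x,y)$. Thus the zero set of $f_2$ is contained in the zero set of $f_1$.

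Next, fix $y$ and read this containment in the $x$ variable only. Since $x \mapsto x^T M_i y = \langle x, M_i y\rangle$ is linear, the condition says $\langle x, M_1 y\rangle = 0$ whenever $x \perp M_2 y$; that is, $M_1 y$ lies in $((M_2 y)^\perp)^\perp = \operatorname{span}(M_2 y)$. This single line handles the degenerate case as well: if $M_2 y = 0$ then $(M_2 y)^\perp$ is all of $\mathbb{R}^d$ and we conclude $M_1 y = 0$. Hence for every $y$ there is a scalar $c(y)$ with $M_1 y = c(y) M_2 y$ whenever $M_2 y \ne 0$, and in particular $\ker M_2 \subseteq \ker M_1$.

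The crux is to show that $c(y)$ is a single constant $\lambda$, and this is where the argument is most delicate, because the naive coefficient-matching only works when images are independent. For two vectors $y_1, y_2$ with $M_2 y_i \ne 0$ I would split into two cases. If $M_2 y_1$ and $M_2 y_2$ are linearly independent, then $M_1(y_1 + y_2) = c(y_1) M_2 y_1 + c(y_2) M_2 y_2$ must itself lie in $\operatorname{span}(M_2 y_1 + M_2 y_2)$ by the previous step, and independence forces $c(y_1) = c(y_2)$. If instead $M_2 y_2 = \alpha M_2 y_1$ with $\alpha \ne 0$, I would exploit the kernel inclusion: $z := y_2 - \alpha y_1$ satisfies $M_2 z = 0$, hence $M_1 z = 0$, which rearranges to $c(y_2) = c(y_1)$. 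The linearly dependent case is the main obstacle — it is exactly the rank-one regime where coefficient-matching collapses — and the null-vector trick $z \in \ker M_2 \subseteq \ker M_1$ is what rescues it. This establishes $M_1 = \lambda M_2$.

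Finally, to see $\lambda > 0$, I would pick any $y$ with $M_1 y \ne 0$ (possible since $M_1 \ne 0$) and set $x = M_1 y$, so that $x^T M_1 y = \|M_1 y\|^2 > 0$; the hypothesis then gives $x^T M_2 y > 0$, and since $x^T M_1 y = \lambda\, x^T M_2 y$ with both sides positive, $\lambda > 0$ follows immediately, which also reconfirms $\lambda \ne 0$ consistently with $M_1 \ne 0$.
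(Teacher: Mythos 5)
Your proof is correct, and it takes a genuinely different route from the paper's. The paper first proves a separation lemma (\lemref{lem0}): if no $z$ satisfies $x^Tz > 0$ and $y^Tz < 0$, then $x = \lambda y$ for some $\lambda > 0$, established by an explicit Cauchy--Schwarz construction of a separating vector; it then fixes $x$, applies this to the row vectors $x^TM_1$ and $x^TM_2$ to obtain $x^TM_1 = \lambda_x\, x^TM_2$, and finally globalizes the scalar via homogeneity along rays plus a continuity argument near the origin. You instead symmetrize the hypothesis (replace $x$ by $-x$) to obtain the zero-set containment $x^TM_2y = 0 \implies x^TM_1y = 0$, read it for fixed $y$ as $M_1y \in ((M_2y)^\perp)^\perp = \operatorname{span}(M_2y)$, which also yields $\ker M_2 \subseteq \ker M_1$, and then globalize the scalar $c(y)$ purely algebraically: coefficient matching on $y_1 + y_2$ when $M_2y_1$ and $M_2y_2$ are independent, and the kernel inclusion applied to $z = y_2 - \alpha y_1$ when they are dependent. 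What your route buys: it is entirely continuity-free, and it handles the rank-one regime explicitly --- which is exactly where the paper's globalization step is most delicate, since $\lambda_x$ is undefined at $x = 0$ and continuity together with constancy along rays does not by itself force a function to be constant (any continuous function of the direction is a counterexample); the paper's step implicitly relies on the algebraic structure that your case split makes explicit. What the paper's route buys: \lemref{lem0} delivers positivity of the proportionality constant for free, whereas your two-sided symmetrization discards sign information, so you must restore $\lambda > 0$ in a separate final step --- which you do correctly, by choosing $x = M_1y$ so that $x^TM_1y = \|M_1y\|^2 > 0$ and invoking the original one-sided hypothesis to get $x^TM_2y > 0$.
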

We defer the proof of this technical lemma to the supplementary material
submitted with the paper.
\begin{lemma}\label{lem3}
  If $\exists x, y\ x^TMy > 0 $ but $x^TM^Ty < 0$ then $M$ is not transitive.
\end{lemma}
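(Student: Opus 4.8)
The plan is to prove the statement as phrased, by assuming there exist $x,y$ with $x^T M y > 0$ and $x^T M^T y < 0$ and then exhibiting an explicit triple $(a,b,c)$ that witnesses a failure of the transitivity condition in the definition of a transitive matrix. First I would rewrite the hypothesis in a more convenient form: since $x^T M^T y$ is a scalar it equals its own transpose, $x^T M^T y = (x^T M^T y)^T = y^T M x$, so the assumption is exactly $x^T M y > 0$ and $y^T M x < 0$. The second inequality is the one I would exploit: it says the reversed pair $(y,x)$ already carries a strictly negative score. I therefore set $a = y$ and $c = x$ and look for an intermediate vector $b$ satisfying both $y^T M b > 0$ and $b^T M x > 0$. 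Any such $b$ gives $a^T M b > 0$, $b^T M c > 0$, yet $a^T M c = y^T M x < 0$, which contradicts transitivity and finishes the argument.

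The two requirements on $b$ are the open half-space conditions $\langle M^T y, b\rangle > 0$ and $\langle M x, b\rangle > 0$. Before solving them I would record a non-degeneracy that the hypothesis already forces: if $Mx = 0$ then $y^T M x = 0$, and if $M^T y = 0$ then $y^T M x = (M^T y)^T x = 0$; either would contradict $y^T M x < 0$, so $Mx \neq 0$ and $M^T y \neq 0$. Writing $p = M^T y$ and $q = Mx$ and normalizing, the natural candidate is $b = p/\|p\| + q/\|q\|$, for which $\langle p, b\rangle = \|p\|(1+\cos\theta)$ and $\langle q, b\rangle = \|q\|(1+\cos\theta)$, where $\theta$ is the angle between $p$ and $q$. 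Both are strictly positive precisely when $\theta \neq \pi$, i.e.\ whenever $p$ and $q$ are not antiparallel, and in that case the construction is complete.

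The one remaining, and I expect the only subtle, obstacle is the \emph{antiparallel} degeneracy $Mx = -\lambda\, M^T y$ with $\lambda > 0$, where the two half-spaces are opposite and no such $b$ exists. To handle it I would perturb, replacing $x$ by $x + \varepsilon u$: the strict inequalities $x^T M y > 0$ and $y^T M x < 0$ are open conditions and survive for small $\varepsilon$, while $M(x + \varepsilon u) = -\lambda M^T y + \varepsilon M u$ leaves the line $\operatorname{span}(M^T y)$ as soon as $Mu \notin \operatorname{span}(M^T y)$, breaking the antiparallelism and returning us to the solved case. Such a $u$ exists unless $\operatorname{range}(M) \subseteq \operatorname{span}(M^T y)$, which forces $M = p z^T$ for some $z$; but then $M^T y = (p^T y) z$, and combined with $M^T y = p$ this gives $z$ parallel to $p$ and hence $M = \frac{1}{p^T y}\, p p^T$, a symmetric matrix. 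A symmetric $M$ satisfies $x^T M y = y^T M x$, contradicting $x^T M y > 0 > y^T M x$, so this sub-case never arises. The rest of the proof is the one-line half-space construction, and neither \lemref{lem2} nor \lemref{lem1} is needed.
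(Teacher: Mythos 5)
Your proof is correct, but it takes a genuinely different route from the paper's. The paper's argument is a three-line reduction to \lemref{lem2}: from $x^TM^Ty = y^TMx < 0$ it gets $y^TM(-x) > 0$, so the triple $(x, y, -x)$ together with the assumed transitivity would force $x^TM(-x) > 0$, i.e. $x^TMx < 0$, which is impossible because \lemref{lem2} shows every transitive matrix is PSD. You instead keep the hypothesis pair as the outer vectors, taking $a = y$, $c = x$ so that $a^TMc = y^TMx < 0$ is the violation itself, and manufacture the middle vector $b$ as a point in the intersection of the two open half-spaces $\{b : \langle M^Ty, b\rangle > 0\}$ and $\{b : \langle Mx, b\rangle > 0\}$; the normalized-sum candidate works except when $Mx$ and $M^Ty$ are antiparallel, and your perturbation argument correctly disposes of that degeneracy (the only way the perturbation can fail is $\operatorname{range}(M) \subseteq \operatorname{span}(M^Ty)$, which forces $M$ to be a positive or negative multiple of $pp^T$, hence symmetric, contradicting $x^TMy > 0 > y^TMx$ outright). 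What each approach buys: the paper's proof is shorter and exploits machinery already established (Lemma~\ref{lem2} precedes Lemma~\ref{lem3} in the development, so nothing is wasted), while yours is self-contained --- it never invokes \lemref{lem2} --- and produces an explicit witness triple violating transitivity, at the cost of a case split and a perturbation step. One small point in your favor: you correctly extract the needed non-degeneracy $Mx \ne 0$ and $M^Ty \ne 0$ from the hypothesis $y^TMx < 0$ before normalizing, so the construction has no hidden gaps.
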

\begin{prf}
  Let $x,y$ be two vectors that satisfy $x^TMy > 0$ and $x^TM^Ty < 0$. Since
  $x^TM^Ty = y^TMx$ therefore $y^T M (-x) > 0$. If we assume $M$ is transitive,
  then $x^T M (-x) > 0$ by transitivity , but \lemref{lem2} shows such an $x$ cannot
  exist.
\end{prf}

\begin{reptheorem}{thmMain}
Every transitive matrix is symmetric.
\end{reptheorem}
\begin{prf}
  By Lemma~\ref{lem3} $x^TMy > 0 \implies x^TM^Ty > 0$. Using \lemref{lem1} we
  get $M = \lambda M^T$ for some $\lambda > 0$. Clearly $\lambda = 1$.
\end{prf}

\bibliography{antirescal}
\bibliographystyle{acl2016}

\clearpage
\section*{Supplementary Material}
Before proving \lemref{lem1} let us present its analogue for vectors.
\begin{lemma}\label{lem0}
  Let $x, y \in \mathbb{R}^d \setminus \left\{ 0 \right\}$.
  If $\nexists z \in \mathbb{R}^d$ such that
  $x^Tz > 0$ and $y^Tz < 0$ then $x = \lambda y$ for some $\lambda > 0$.
\end{lemma}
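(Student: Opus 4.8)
The plan is to prove the contrapositive: if $x$ is not a positive scalar multiple of $y$, then I can exhibit a $z$ with $x^T z > 0$ and $y^T z < 0$, contradicting the hypothesis. The central object is the linear map $\phi : \mathbb{R}^d \to \mathbb{R}^2$ given by $\phi(z) = (x^T z,\, y^T z)$. The hypothesis says precisely that $\phi$ never lands in the open quadrant $\{(s,t) : s > 0,\ t < 0\}$, and I want to show this forces $x = \lambda y$ with $\lambda > 0$.

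First I would split into two cases according to whether $x$ and $y$ are linearly independent. In the independent case, the $2 \times d$ matrix whose rows are $x^T$ and $y^T$ has rank $2$, so $\phi$ is surjective onto $\mathbb{R}^2$. In particular there is some $z$ with $\phi(z) = (1, -1)$, i.e. $x^T z > 0$ and $y^T z < 0$, contradicting the hypothesis. Hence $x$ and $y$ must be linearly dependent.

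In the dependent case, since both vectors are nonzero, we may write $x = \lambda y$ for some scalar $\lambda \ne 0$, and it remains only to rule out $\lambda < 0$. If $\lambda < 0$, I would test the hypothesis at $z = x$: then $x^T z = \|x\|^2 > 0$ while $y^T z = \lambda \|y\|^2 < 0$, once again producing the forbidden pair. Therefore $\lambda > 0$, which is exactly the claim.

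The argument is short, and the only step demanding care is the independent case, where I must justify that the pair of functionals $z \mapsto (x^T z,\, y^T z)$ is jointly surjective onto $\mathbb{R}^2$. This follows immediately from linear independence of $x$ and $y$ (equivalently, nonsingularity of their Gram matrix $\left[\begin{smallmatrix} x^Tx & x^Ty \\ x^Ty & y^Ty \end{smallmatrix}\right]$), so I expect no real obstacle; the dependent case is a one-line substitution.
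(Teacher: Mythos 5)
Your proof is correct and is essentially the paper's own argument: the paper also splits on whether $x$ and $y$ are linearly dependent, rules out a negative ratio in the dependent case, and in the independent case produces a $z$ in the span of $x$ and $y$ with $x^Tz = 1$ and $y^Tz = -1$. Where you invoke surjectivity of $z \mapsto (x^Tz, y^Tz)$ via the nonsingular Gram matrix, the paper writes out the same solution explicitly, using Cauchy--Schwarz to guarantee the Gram determinant is nonzero.
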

\begin{proof}
  If $x = \lambda y$ then $x^{{T}} y = \lambda y^{{T}} y$. Since $y^{{T}} y > 0$
  therefore $\lambda > 0$. In the
  case that $x \ne \lambda y$ then by Cauchy Schwartz inequality $D := (x^{{T}}
y)^2 - (x^{{T}} x)(y^{{T}} y) \ne 0$. Consider the vector $\alpha x + \beta y$
with $\alpha = -\frac{x^{{T}} y + y^{{T}} y}{D}$ and $\beta = \frac{x^{{T}} y +
x^{{T}} x}{D}$. It is easy to check that $(\alpha x + \beta y)^{{T}} x$ and
$(\alpha x + \beta y)^{{T}} y$ equal 1 and $-1$, which contradicts
the hypothesis.
\end{proof}

\begin{replemma}{lem1}
  Let $M_1, M_2 \in \mathbb{R}^{d \times d} \setminus \left\{ 0 \right\}$. If $\
  \forall x, y : x^TM_1y > 0 \implies x^TM_2y > 0$
  then $M_1 = \lambda M_2$ for some $\lambda > 0$.
\end{replemma}
\begin{proof}
  Choose an $x \in \mathbb{R}^d$ for which $x^TM_1 \ne 0$.
  If such an $x$ does not $M_1 = 0$ in contradiction to the hypothesis.
  Note that if $x^TM_1y \ne 0$ then either $x^TM_1y$ or $x^TM_1-y$ would be positive.
  Since $(x^TM_1) y > 0 \implies (x^TM_2) y > 0$ therefore $\nexists y$ for
  which $(x^TM_1) y > 0$ but $(x^TM_2) y < 0$.
  By \lemref{lem0} $x^TM_1 = \lambda_x x^T M_2$. Furthermore from the proof of \lemref{lem0}
  $\lambda_x = \frac{x^TM_1M_2x}{x^TM_2M_2x}$ therefore $\lambda_x$ is continous
  with respect to $x$.
  Now we prove that $\lambda_x$ is constant.
  Consider vectors $x$ and $\alpha x$. As shown earlier, $(\alpha x)^T M_1 =
  \lambda_{\alpha x} (\alpha x)^T M_2$. But $(\alpha x)^T M_1 = \alpha (x^T M_1)
  = \alpha \lambda_x x^T M_2$. Therefore $\lambda_{\alpha x} = \lambda_x$.
  Since $\lambda_x$ is continous at $0$ therefore
  $\lambda_{\alpha x}$ equals the constant $\lambda_{0}$.
  This implies $x^T(M_1 - \lambda_{0} M_2) = 0$.
  Clearly $\lambda = \lambda_{0} > 0$.
\end{proof}
\end{document}